\newtheorem{lem}{Lemma}
\newtheorem{theorem}{Theorem}
\newtheorem{problem}{Problem}
\title{\LARGE \bf
Planning with Learned Dynamics: Probabilistic Guarantees \\on Safety and Reachability via Lipschitz Constants
}
\author{Craig Knuth*$^1$, Glen Chou*$^1$, Necmiye Ozay$^1$ and Dmitry Berenson$^1$
\thanks{*C. Knuth and G. Chou contributed equally to this work.}%
\thanks{$^1$All authors affiliated with the University of Michigan, Ann Arbor, MI 48109, US, \texttt{\{cknuth, gchou, necmiye, dmitryb\}@umich.edu}}%
\thanks{This work is funded by NSF grants IIS-1750489 and ECCS-1553873, ONR grants N00014-21-1-2118 and N00014-18-1-2501, and a National Defense Science and Engineering Graduate (NDSEG) fellowship.}%
}
\newcommand{\X}{\mathcal{X}}
\newcommand{\U}{\mathcal{U}}
\renewcommand{\S}{\mathcal{S}}
\newcommand{\B}{\mathcal{B}}
\newcommand{\Xsafe}{\X_{\textrm{safe}}}
\newcommand{\Xunsafe}{\X_{\textrm{unsafe}}}
\newcommand{\T}{\mathcal{T}}
\newcommand{\tx}{\bar{x}}
\newcommand{\tu}{\bar{u}}
\begin{document}

\maketitle
\thispagestyle{empty}
\pagestyle{empty}

\begin{abstract}

We present a method for feedback motion planning of systems with unknown dynamics which provides probabilistic guarantees on safety, reachability, and goal stability. To find a domain in which a learned control-affine approximation of the true dynamics can be trusted, we estimate the Lipschitz constant of the difference between the true and learned dynamics, and ensure the estimate is valid with a given probability. Provided the system has at least as many controls as states, we also derive existence conditions for a one-step feedback law which can keep the real system within a small bound of a nominal trajectory planned with the learned dynamics. Our method imposes the feedback law existence as a constraint in a sampling-based planner, which returns a feedback policy around a nominal plan ensuring that, if the Lipschitz constant estimate is valid, the true system is safe during plan execution, reaches the goal, and is ultimately invariant in a small set about the goal. We demonstrate our approach by planning using learned models of a 6D quadrotor and a 7DOF Kuka arm. We show that a baseline which plans using the same learned dynamics \textit{without} considering the error bound or the existence of the feedback law can fail to stabilize around the plan and become unsafe.
\end{abstract}

\section{INTRODUCTION}

Planning and control with guarantees on safety and reachability for systems with unknown dynamics has long been sought-after in the robotics and control community. Model-based optimal control can achieve this if the dynamics are precisely modeled, but modeling assumptions inevitably break down when applied to real physical systems due to unmodeled effects from friction, slip, flexing, etc. To account for this gap, data-driven machine learning methods and robust control seek to sidestep the need to precisely model the dynamics \textit{a priori}. While robust control can provide strong guarantees when the unmodeled component of the dynamics is small and satisfies strong structural assumptions \cite{zhou1998essentials}, such methods requires an accurate prior which may not be readily available. In contrast, machine learning methods are flexible but often lack formal guarantees, precluding their use in safety-critical applications. 
For instance, small perturbations from training data cause drastically poor and costly predictions in stock prices and power consumption \cite{mode2020adversarial}. Since even small perturbations from the training distribution can yield untrustworthy results, applying AI systems to predict dynamics can lead to unsafe, unpredictable behavior.%

To address this gap, we propose a method for planning with learned dynamics which yields probabilistic guarantees on safety, reachability, and goal invariance in execution on the true system. Our core insight is that we can determine where a learned model can be trusted for planning using the Lipschitz constant of the error (the difference between the true and learned dynamics), which also informs how well the training data covers the task-relevant domain. Under the assumption of deterministic true dynamics, we can plan trajectories in this trusted domain with strong safety guarantees for an important class of learned dynamical systems.

Specifically, with a Lipschitz constant, we can bound the difference in dynamics between a novel point (that our model was not trained on) and a training point. Since the bound grows with the distance to training points, we can naturally define a domain where the model can be trusted as the set of points within a certain distance to training points. 
Conversely, to obtain a small bound over a desired domain, it is necessary to have good training data coverage in the task-relevant domain. At a high level, to obtain a small bound on the error in a domain, we want to have good coverage over the domain and regularity of the learned model via the Lipschitz constant of the error.

Our safety and reachability guarantees ultimately rely on an overestimate of the smallest Lipschitz constant. To find an estimate that exceeds the smallest Lipschitz constant with a given probability $\rho$, we use a statistical approach based on Extreme Value Theory \cite{de2007extreme} and validate its result with a Kolmogorov-Smirnov goodness-of-fit test \cite{degroot2013probability}. If the test validates our estimate, we can choose a confidence interval \cite{degroot2013probability} with an upper bound that overestimates the true Lipschitz constant with probability $\rho$. Our method requires the estimation of three Lipschitz constants, translating to system safety and reachability guarantees which hold with a probability of at least $\rho^3$. This guarantee is fairly strong as it holds for all time, unlike many methods offering probabilistic guarantees on a per trajectory or episode basis \cite{akametalu2014reachability} \cite{berkenkamp2017safe} \cite{van2011lqg}.

If the learned dynamics have at least as many controls as states and are control-affine
(note we do not assume the true dynamics are also control-affine), then we also determine conditions for the existence of a feedback controller that tightly tracks the planned trajectory in execution under the true dynamics.
The tight tracking error bound yields favorable properties for our planner and controller: if we have a valid Lipschitz constant estimate for a sufficiently-accurate learned model, 1) we guarantee safety if no obstacle is within the tracking error of the trajectory, 2) we guarantee we can reach the goal within a small tolerance, and 3) if we can assert a feedback law that keeps the system at the goal exists, then the closed-loop system is guaranteed to remain in a small region around the goal. In this paper, we assume the learned dynamics are control-affine, deterministic, and have at least as many controls as states (such as a robotic arm under velocity control), the true dynamics are deterministic, and that independent samples of the true dynamics can be taken in the domain of interest. 
Our contributions are:

\begin{enumerate}
    \item A method to bound error between two general dynamics functions in a domain by using a Lipschitz constant
    \item A condition for uncertain control-affine systems that guarantees the existence of a feasible feedback law
    \item A planner that probabilistically guarantees safety and closed-loop stability-like properties about the goal for learned dynamics with as many controls as states
    \item Evaluation on a 7DOF Kuka arm and a 6D quadrotor
\end{enumerate}

\section{RELATED WORK}

Prior work has used data coverage and Lipschitz constant regularization to ensure properties of a learned function. \cite{dean2020robust} shows that a linearization of a nonlinear state estimator generalizes with bounded error by estimating the maximum slope (related to the Lipschitz constant) of the error. \cite{robey2020learning} learns a control barrier function (CBF) from data, ensuring its validity through Lipschitz constant regularity and by checking the CBF conditions at a finite set of points. 
In contrast, we apply and extend these ideas to plan with learned dynamics, using the Lipschitz constant of the error dynamics to provide safety, reachability, and stability-like guarantees.

More broadly, our work is related to methods for planning and control of unknown dynamics with performance guarantees. A traditional approach is robust control, which assumes a good prior on the true dynamics and that unmodeled components are tightly bounded in some set \cite{zhou1998essentials}. 
Robust control has been applied in model predictive control \cite{aswani2013provably} and Hamilton-Jacobi (HJ) reachability analysis \cite{mitchell2005time}. 
While these methods have strong guarantees, the assumption that the unmodeled dynamics can be tightly bounded 
requires a accurate prior whereas our method does not, by actively keeping both planned and executed trajectories in a domain where the model can be trusted without \textit{a priori} knowledge.

Other methods use Gaussian Processes (GPs) to estimate the mean and covariance of the dynamics, providing probabilistic bounds on safety and reachability. For example, \cite{koller2018learning} probabilistically bounds the reachable set of a fixed horizon trajectory. 
Similarly \cite{akametalu2014reachability} explores the environment while ensuring (with some probability) safety via HJ reachability analysis. 
In many contexts, GPs are used to derive confidence bounds that can provide probabilistic safety guarantees \cite{berkenkamp2016safe} \cite{berkenkamp2017safe}. These methods can model dynamics with stochasticity, which our method cannot handle. However, the GP-based methods are incapable of long-horizon planning due to the 
unbounded growth of the covariance ellipse unless a known feedback controller exists. Our method does not require any prior controller, and
we can plan trajectories of arbitrary length without unbounded growth of the reachable set.

Other work performs long-horizon planning with learned models without safety or reachability guarantees. \cite{ichter2019robot} plans in learned latent spaces. 
\cite{guzzi2020path} estimates the confidence that a controller can move between states to guide planning.
\cite{mcconachie2020learning} learns when a reduced-order model can be used. 
Unlike \cite{ichter2019robot}-\cite{mcconachie2020learning}, our method provides safety and reachability guarantees.

Our safety guarantees rely on proving the existence of a stabilizing feedback controller in execution, like LQR-trees \cite{tedrake2009lqr}, funnel libraries \cite{MajumdarT17}, and LQG-MP \cite{van2011lqg}. Unlike these approaches, our method requires no \textit{a priori} model and can prove a feedback law exists without structural assumptions on the true dynamics (e.g. that they are polynomial).

\section{PRELIMINARIES}

Let $f: \X \times \U \rightarrow \X$ be the true unknown discrete-time dynamics where $\X$ is the state space and $\U$ is the control space, which we assume are deterministic. We define $g: \X \times \U \rightarrow \X$ to be an approximation of the true dynamics that is control-affine and therefore can be written as follows

\begin{equation}\label{eq:g}
    g(x,u) = g_0(x) + g_1(x) u. 
\end{equation}

In this paper, we represent the approximate dynamics with a neural network, though our method is agnostic to the structure of the model and how it is derived.
Let $\S = \{(x_i,u_i,f(x_i,u_i))\}_{i=1}^N$ be the training data for $g$, and let $\Psi = \{(x_j,u_j,f(x_j,u_j))\}_{j=1}^M$ be another set of samples collected near $\S$ that will be used to estimate the Lipschitz constant. We use $\bar\cdot$ to refer to data points from $\S$ or $\Psi$. We place no assumption on how $\S$ is obtained; any appropriate method (uniform sampling, perturbations from expert trajectories, etc.) may be employed%
, although we require independent and identically distributed (i.i.d.) samples for $\Psi$. A single state-control pair is written as $(x,u)$. With some abuse of notation, we write $(\tx,\tu) \in \S$ if $(\tx,\tu) = (x_i, u_i)$ for some $1 \leq i \leq N$ (similarly for $\Psi$).

A Lipschitz constant bounds how much outputs change with respect to a change in the inputs. For some function $h$, a Lipschitz constant over a domain $\mathcal{Z}$ is any number $L$ such that for all $z_1, z_2 \in \mathcal{Z}$

\begin{equation}\label{eq:lipschitz_def}
    \| h(z_1) - h(z_2) \| \leq L \| z_1 - z_2 \|
\end{equation}

Norms $\|\cdot\|$ are always the 2-norm or induced 2-norm. We define $L_{f-g}$, $L_{g_0}$, and $L_{g_1}$ as the smallest Lipschitz constants of the error $f - g$, $g_0$, and $g_1$. The input to $f-g$ is a state-control pair $(x,u)$ and its output is a state. For $g_0$, both the input and output are a state. For $g_1$, its input is a state and its output is a $\texttt{dim}(\X) \times \texttt{dim}(\U)$ matrix where \texttt{dim}$(\cdot)$ is the dimension of the space. A ball $\B_r(x)$ of radius $r$ about a point $x$ is defined as the set $\{y \enspace | \enspace \|y - x\| < r\}$, also referred to as a $r$-ball about $x$. We suppose the state space $\X$ is partitioned into safe $\Xsafe$ and unsafe $\Xunsafe$ sets (e.g., the states in collision with an obstacle).

The method consists of two major components. First, we determine a trusted domain $D \subseteq \mathcal{X} \times \mathcal{U}$ and estimate the Lipschitz constants. Second, we use $D$ to find a path to the goal satisfying our safety and reachability requirements. 

\begin{problem}\label{prob:D}
Given a learned model $g$, unknown dynamics $f$, and datasets $\Psi$ and $\S$, determine the trusted domain $D$ where $\|f(x,u)-g(x,u)\| \leq \epsilon$, for some $\epsilon > 0$. Additionally determine the Lipschitz constants $L_{f-g}$, $L_{g_0}$, and $L_{g_1}$ in $D$.
\end{problem}

\begin{problem}\label{prob:path}
Given control-affine $g$, unknown $f$, start $x_I$, goal $x_G$, goal tolerance $\lambda$, $D$, $L_{f-g}$, $L_{g_0}$, $L_{g_1}$, and $\Xunsafe$, plan a trajectory $(x_0,\ldots,x_K)$, $(u_0,\ldots,u_{K-1})$ such that $x_0 = x_I$, $x_{k+1} = g(x_k,u_k)$, $K  < \infty$, and $\|x_K - x_G\| \leq \lambda$. Additionally, under the true dynamics $f$, guarantee that closed loop execution does not enter $\Xunsafe$, converges to $\B_{\epsilon+\lambda}(x_G)$, and remains in $\B_{\epsilon+\lambda}(x_G)$ after reaching $x_K$.
\end{problem}

\section{METHOD}

Secs. \ref{sec:domain} - \ref{sec:estimating_lip} and \ref{sec:planning} - \ref{sec:alg} cover our approaches to Probs. \ref{prob:D} and \ref{prob:path}, respectively. In Sec. \ref{sec:domain}, we show how $L_{f-g}$
can establish a trusted domain and how $L_{f-g}$ can be estimated in Sec. \ref{sec:estimating_lip}. In Sec. \ref{sec:planning}, we design a planner that ensures safety, that the system remains in the trusted domain, and that a feedback law maintaining minimal tracking error exists.
We present the full algorithm in Sec. \ref{sec:alg}.

\subsection{The trusted domain}\label{sec:domain}

For many systems, we are only interested in a task-relevant domain, and it is often impossible to collect data everywhere in state space, especially for high-dimensional systems. Hence, it is natural that our learned model is only accurate near training data. With a Lipschitz constant of the error, we can precisely define how accurate the learned dynamics are in a domain constructed from the training data.
We note this derivation can also be done for systems without the control-affine assumption on the learned dynamics, and thus it can still be useful for determining where a broader class of learned models can be trusted. However, removing the control-affine structure makes controller synthesis much more difficult, and is the subject of future work.

Consider a single training point $(\tx,\tu)$ and a novel point $(x,u)$. We derive a bound on the error between the true and estimated dynamics at $(x,u)$ using the triangle inequality and Lipschitz constant of the error:
\begin{equation}
\begin{aligned}
    \|f(&x,u) - g(x,u)\| \\
    &= \|f(x,u) - g(x,u) - f(\tx,\tu) + g(\tx,\tu) \\
    & \qquad + f(\tx,\tu) - g(\tx,\tu)\| \\
    &\leq L_{f-g} \|(x,u) - (\tx,\tu)\| + \|f(\tx,\tu) - g(\tx,\tu)\|.
\end{aligned}
\end{equation}

The above relation describes the error at a novel point, but we can also generalize to any domain $D$. Define $b_T$ to be the dispersion \cite{lavalle2006planning} of $\S \cap D$ in $D$
and define $e_T$ to be the maximum training error of the learned model. Explicitly,
\begin{equation}
    b_T \doteq \underset{(x,u) \in D}{\sup} \quad \underset{(\tx,\tu) \in \S \cap D}{\min} \quad \|(x,u) - (\tx,\tu)\|
\end{equation}\begin{equation}
    e_T \doteq \underset{(\tx,\tu) \in \S \cap D}{\max} \quad \|f(\tx,\tu) - g(\tx,\tu)\|
\end{equation}
Then, we can uniformly bound the error across the entire set $D$ to yield a simple and exact relation between $f$ and $g$.
\begin{equation}\label{eq:epsilon}
    \epsilon \doteq L_{f-g} b_T + e_T
\end{equation}
\begin{equation}\label{eq:nonauto_dyn}
    \forall (x,u) \in D \quad f(x,u) = g(x,u) + \delta, \quad \|\delta\| \leq \epsilon
\end{equation}

\begin{figure}
    \centering
    \includegraphics[width=\linewidth]{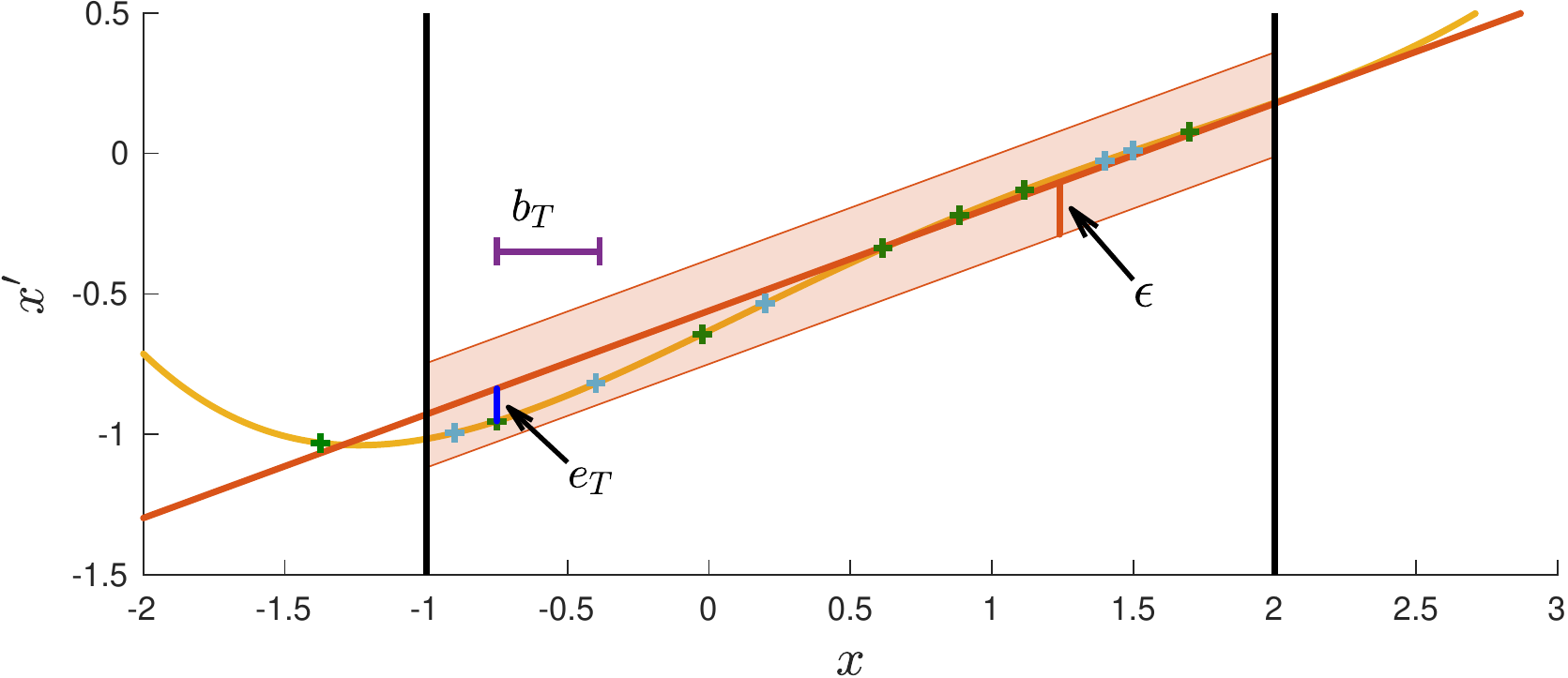}
    \caption{An example with $f(x) = x'$, $\texttt{dim}(\X) = 1$, and $\texttt{dim}(\U) = 0$. True dynamics: yellow; learned linear dynamics: orange; $\S$: green crosses; $\Psi$: light blue crosses; domain $D$: interval $[-1, 2]$, bordered in black. Here, $b_T = 0.3633$ (purple) and $e_T = 0.1161$ (blue). The Lipschitz constant of the error is $L_{f-g} = 0.1919$, yielding $\epsilon = 0.1859$. We can use this bound to ensure the difference between the learned and true dynamics is no more than $\epsilon$ in $D$ (shaded orange area). Note $L_{f-g}$ can be larger outside of $D$.}
    \label{fig:bt_et_e}
\end{figure}

See Fig. \ref{fig:bt_et_e} for an example of these quantities. For the remainder of the method, we select $D$ to be the union of $r$-balls about a subset of the training data $\S_D \subset \S$:
\begin{equation}\label{eq:def_D}
    D = \bigcup_{(\tx,\tu) \in \S_D} \, \B_{r}(\tx,\tu)
\end{equation}
In the next section we discuss selection of $\S_D$, its role in estimating $L_{f-g}$, and how $r$ is selected.

\subsection{Estimating the Lipschitz constant}\label{sec:estimating_lip}

For \eqref{eq:nonauto_dyn} to hold over all of $D$, we require that $L_{f-g}$ is a Lipschitz constant for the error. We use results from Extreme Value Theory to obtain an estimate $\hat{L}_{f-g}$ that overestimates $L_{f-g}$, i.e. $\hat{L}_{f-g} \ge L_{f-g}$, with a user-defined probability $\rho$.

We build on \cite{weibull}-\cite{weng2018evaluating}, which find an estimate $\hat{L}_h$ of the Lipschitz constant $L_h$ for a function $h(z)$ over a domain $\mathcal{Z}$ by estimating the location parameter $\gamma$ of a three-parameter reverse Weibull distribution, which for a random variable $W$ has the  cumulative distribution function (CDF) $$F_W(w) =   \begin{cases}
    \exp(-(\frac{\gamma - w}{\alpha})^\beta), & \textrm{if } w < \gamma \\
    1, & \textrm{if } w \ge \gamma.
  \end{cases}$$Here, the location parameter $\gamma$ is the upper limit on the support of the distribution, and $\alpha$ and $\beta$ are the scale and shape parameters, respectively. Consider the random variable described by the maximum slope taken over $N_L$ pairs of i.i.d. samples $\{(z_1^i,z_2^i)\}_{i=1}^{N_L}$ from $\mathcal{Z}$, i.e. $s = \max_i \frac{\Vert h(z_1^i) - h(z_2^i) \Vert}{\Vert z_1^i - z_2^i \Vert}$. From the Fisher-Tippett-Gnedenko Theorem \cite{de2007extreme}, $s$ follows one of the Frechet, reverse Weibull, or Gumbel distributions in the limit as $N_L$ approaches infinity. If $s$ follows the reverse Weibull distribution, which we validate in our results using the Kolmogorov-Smirnov (KS) goodness-of-fit test \cite{degroot2013probability} with a significance value of 0.05 (the same threshold used in \cite{weng2018evaluating}), then $L_h$ is finite and equals $\gamma$. We estimate $L_h$ using the location parameter $\hat\gamma$ of a reverse Weibull distribution fit via maximum likelihood to $N_S$ samples of $s$. Finally, we compute a confidence interval $c = \Phi^{-1}(\rho) \xi$ on $\hat\gamma$. Here $\xi$ is the standard error of the fit $\hat\gamma$, which correlates with the quality of the fit, and $\Phi(\cdot)$ is the standard normal CDF \cite{degroot2013probability}. We select the upper end of the confidence interval as our estimate $\hat{L}_h = \hat\gamma + c$, which overestimates $L_h$ with probability $\rho$. Note that increasing $\rho$ increases $c$, improving the safety probability at the cost of loosening $\hat{L}_h$, which can make planning more conservative. We also note that this probability is valid in the limit as $N_L$ approaches infinity, due to the Fisher-Tippett-Gnedenko theorem making claims only on the asymptotic distribution. We summarize the estimation method in Alg. \ref{alg:lipschitz}. 

\begin{algorithm}\label{alg:lipschitz}\small\DontPrintSemicolon
\KwIn{$N_S$, $N_L$, $\rho$}

\For{$j = 1, \ldots, N_S$}{
    sample $\{(z_1^{i,j}, z_2^{i,j})\}_{i=1}^{N_L}$ uniformly in $\mathcal{Z}$ \\
    compute $s_j = \max_i \Vert h(z_1^{i,j}) - h(z_2^{i,j}) \Vert/\Vert z_1^{i,j} - z_2^{i,j} \Vert$\\}
fit reverse Weibull to $\{s_j\}$ to obtain $\hat{\gamma}$ and standard error $\xi$ \\
validate fit using KS test with significance level 0.05 \\
\textbf{if} validated \Return $\hat{L}_h = \hat{\gamma} + \Phi^{-1}(\rho)\xi$ \textbf{else} \Return failure
\caption{Lipschitz estimation for $h(z)$ over $\mathcal{Z}$}
\end{algorithm}

We wish to choose $D$ to be large enough for planning while also keeping $L_{f-g}$ small. To achieve this, we use a filtering procedure to reduce the impact of outliers in $\S$. Let $\mu$ and $\sigma$ be the mean and standard deviation of the error over $\S$. Then, let $\S_D = \{(\tx,\tu) \in \S \enspace | \enspace \|f(\tx,\tu) - g(\tx,\tu)\| \leq \mu + a \sigma\}$ where $a$ is a user-defined parameter. Then, we run Alg. \ref{alg:select_D} in order to grow $D$. This method works by proposing values of $r$, estimating $L_{f-g}$, and increasing $r$ until $r > \epsilon$ or $L_{f-g} \geq 1$. Finding $D$ with $r > \epsilon$ and $L_{f-g} < 1$ is useful for planning (described further in Sec. \ref{sec:planning}, see \eqref{eq:select_bt}). Note that, in Euclidean spaces, $r \geq b_T$. If no filtering is done, $r = b_T$, since no point in $D$ is further than a distance $r$ from $\S_D$ and the furthest any point in $D$ can lie from a point in $\S_D$ is $r$; however, filtering shrinks $D$ and thus decreases the dispersion, making it possible that $r \ge b_T$.
The parameter $a$ should be chosen to balance the size of $D$ against the magnitude of $L_{f-g}$, which we tune heuristically.

This filtering lets us exclude regions where our learned model is less accurate, yielding smaller $e_T$. Note that filtering does not affect the i.i.d. property of the samples needed for Alg. \ref{alg:lipschitz}; it only applies a mask to the domain. We also note that Alg. \ref{alg:select_D} returns a minimum value for $r$, but a larger $r$ can be chosen as long as $L_{f-g}$ is estimated with Alg. \ref{alg:lipschitz}. A larger $r$ makes planning easier by expanding the trusted domain.

\begin{algorithm}\label{alg:select_D}\small\DontPrintSemicolon
\KwIn{$\mu$, $\sigma$, $a$, $S_D$, $\Psi$, $\alpha > 0$}

$r \leftarrow \mu + a \sigma$ \\
\While{True}{
    construct $D$ using equation \eqref{eq:def_D} \\
    estimate $L_{f-g}$ using Alg. \ref{alg:lipschitz} and $\Psi$ \\
    calculate $\epsilon$ using equation \eqref{eq:epsilon} \\
    \lIf{$L_{f-g} \geq 1$}{\Return failure} \label{line:term_cond_fail}
    \lIf{$r > \epsilon$}{\Return $r$ and $D$} \label{line:term_cond}
    \lElse{$r \leftarrow \epsilon + \alpha \quad$ \texttt{\ //\enspace $\alpha$ is a small constant}}}
\caption{Selecting $r$ and $D$}
\end{algorithm}

While we never explicitly address the assumption that the true dynamics are deterministic, the estimated Lipschitz constant may be unbounded in the stochastic case, such as when two samples have the same inputs but different outputs due to noise, causing a division by 0 in line 3 of Alg. \ref{alg:lipschitz}.

$L_{g_0}$ and $L_{g_1}$ may also be estimated with Alg. \ref{alg:lipschitz}, which we employ in the results. Alternatively, \cite{fazlyab2019efficient} can give tight upper bounds on the Lipschitz constant of neural networks, though it could not scale to the networks used in our results. Other approaches \cite{JordanD20} improve scalability at the cost of looser Lipschitz upper bounds, and will be examined in the future. 

\subsection{Planning}\label{sec:planning}

We want to plan a trajectory from start $x_I$ to goal $x_G$ using the learned dynamics while remaining in $\Xsafe$ in execution. We constrain the system to stay inside $D$, as model accuracy may degrade outside of the trusted domain.
We develop a planner similar to a kinodynamic RRT \cite{lavalle2001randomized}, growing a search tree $\T$ by sampling controls that steer towards novel states until we reach the goal. If a path is found the we can ensure the goal is reachable with safety guarantees.

\subsubsection{Staying inside $D$}\label{sec:in_D}

To remain inside the set $D$, we introduce another set $D_\epsilon := D \ominus \B_\epsilon(0)$, which is the Minkowski difference between $D$ and a ball of radius $\epsilon$. 
Every point in $D_\epsilon$ is at least a distance of $\epsilon$ from any point in the complement of $D$. Since the learned dynamics differs from the true dynamics by at most $\epsilon$ in $D$, controlling to a point in $D_\epsilon$ under the learned dynamics ensures the system remains within $D$ under the true dynamics (see Fig. \ref{fig:d_eps}).

\begin{figure}
    \centering
    \includegraphics[width=0.4\textwidth]{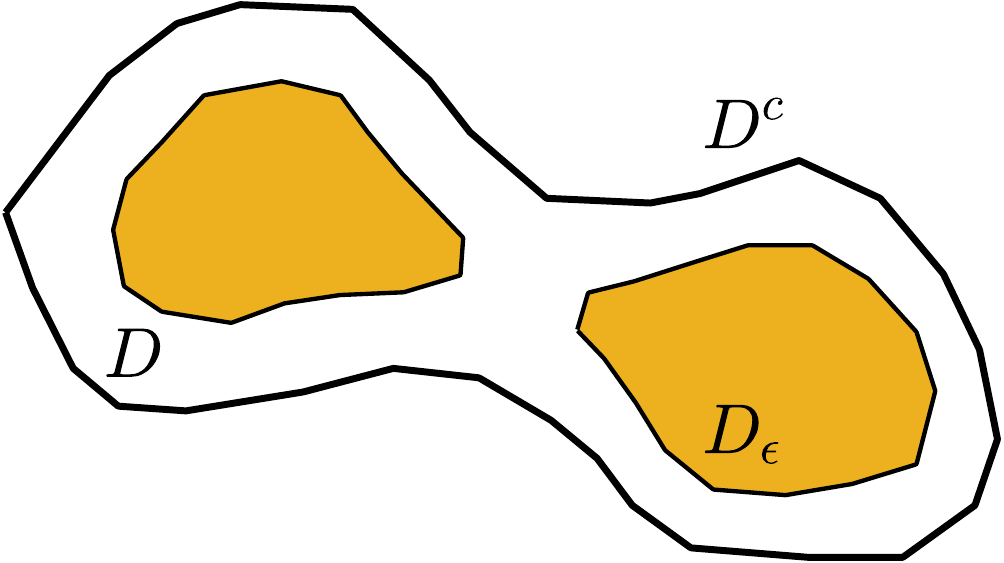}
    \caption{Visualizing $D$ (boundary in black), $D_\epsilon$ (yellow), and $D^c$ (complement of $D$). Each point in $D_\epsilon$ is at least $\epsilon$ distance away from $D^c$. If the system is controlled to a point in $D_\epsilon$ from anywhere in $D$ under the learned dynamics, then it remains in $D$ under the true dynamics.} \label{fig:d_eps}
\end{figure}

How do we determine if a query point $(x,u)$ is inside of $D_\epsilon$? Since we define $D$ to be a union of balls \eqref{eq:def_D}, it would suffice to find a subset of training points $\mathcal{W} \subset S_D$ such that the union of $r$-balls about the training points completely covers an $\epsilon$-ball about $(x,u)$. Explicitly,
\begin{equation}\label{eq:coverage}
    \bigcup_{(\tx,\tu) \in \mathcal{W}} \, \B_{r}(\tx,\tu) \supset \B_\epsilon(x,u)
\end{equation}

In general, checking \eqref{eq:coverage} is difficult, but if $L_{f-g} < 1$ and
\begin{equation}\label{eq:select_bt}
    r > \frac{e_T}{1 - L_{f-g}},
\end{equation}

\noindent then only one training point within a distance $r - \epsilon$ is needed to ensure a query point is in $D_\epsilon$ (see Fig. \ref{fig:small_l}). Note by Alg. \ref{alg:select_D} lines \ref{line:term_cond_fail}-\ref{line:term_cond}, either \eqref{eq:select_bt} is guaranteed or $L_{f-g} \geq 1$, in which we return failure.

\begin{lem}
If $L_{f-g} < 1$ and $r$ is selected according to equation \eqref{eq:select_bt}, then a point $(x,u)$ is in $D_\epsilon$ if there exists $(\tx,\tu) \in S$ such that $\|(x,u) - (\tx,\tu)\| \leq r - \epsilon$. 
\end{lem}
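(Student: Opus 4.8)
The plan is to unfold the Minkowski-difference definition of $D_\epsilon$ and reduce the claim to a single ball-containment settled by the triangle inequality. By definition $(x,u) \in D_\epsilon = D \ominus \B_\epsilon(0)$ exactly when $\B_\epsilon(x,u) \subseteq D$, so it suffices to exhibit one ball from the union \eqref{eq:def_D} that already contains $\B_\epsilon(x,u)$. I would take the training point $(\tx,\tu) \in \S_D$ supplied by the hypothesis, with $\|(x,u)-(\tx,\tu)\| \le r - \epsilon$, and argue $\B_\epsilon(x,u) \subseteq \B_r(\tx,\tu) \subseteq D$.

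First I would verify the ball containment directly: for any $(y,v) \in \B_\epsilon(x,u)$, the triangle inequality gives $\|(y,v)-(\tx,\tu)\| \le \|(y,v)-(x,u)\| + \|(x,u)-(\tx,\tu)\| < \epsilon + (r-\epsilon) = r$, so $(y,v) \in \B_r(\tx,\tu)$. Since $(\tx,\tu) \in \S_D$, we have $\B_r(\tx,\tu) \subseteq D$ by \eqref{eq:def_D}, and chaining the inclusions yields $\B_\epsilon(x,u) \subseteq D$, i.e. $(x,u) \in D_\epsilon$. This is short, but it is where the $r-\epsilon$ margin does its work: it is precisely the slack needed so that an $\epsilon$-ball centered within $r-\epsilon$ of $(\tx,\tu)$ still fits inside the $r$-ball.

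The step that actually consumes the standing assumptions $L_{f-g} < 1$ and \eqref{eq:select_bt} is checking that this hypothesis is not vacuous, i.e. that $r - \epsilon > 0$ so that some point can satisfy $\|(x,u)-(\tx,\tu)\| \le r-\epsilon$. Here I would substitute $\epsilon = L_{f-g} b_T + e_T$ from \eqref{eq:epsilon} and invoke the Euclidean fact $b_T \le r$ noted in Sec. \ref{sec:estimating_lip}: then $r - \epsilon = r - L_{f-g} b_T - e_T \ge r(1 - L_{f-g}) - e_T$, and \eqref{eq:select_bt} with $1 - L_{f-g} > 0$ gives $r(1-L_{f-g}) > e_T$, hence $r - \epsilon > 0$.

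I expect the only real subtlety — the ``hard part'' — to be bookkeeping rather than anything deep: ensuring the chosen point genuinely lies in $\S_D$ (so that its $r$-ball is a member of the union defining $D$, not merely a point of $\S$), and tracking open versus closed balls so that the strict inequality $<r$ is preserved through the triangle step. Everything past that is a direct geometric inclusion.
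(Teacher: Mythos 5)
Your proposal is correct and follows essentially the same route as the paper: the hypotheses $L_{f-g}<1$ and \eqref{eq:select_bt} (together with $b_T \le r$) serve only to guarantee $r > \epsilon$, and the main step is the ball inclusion $\B_\epsilon(x,u)\subset\B_r(\tx,\tu)\subset D$ via the triangle inequality, giving $(x,u)\in D_\epsilon$. Your explicit attention to $\S_D$ versus $\S$ and to open/closed balls is a reasonable tightening of details the paper glosses over, but it is not a different argument.
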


\begin{proof}To prove, note we can rearrange terms in equation \eqref{eq:select_bt} to get $r > L_{f-g} r + e_T \ge \epsilon$. If there exists $(\tx,\tu) \in S_D$ such that $\|(x,u) - (\tx,\tu)\| \leq r - \epsilon$, then $\B_\epsilon(x,u) \subset \B_{r}(\tx,\tu) \subset D$ since no point in $\B_\epsilon(x,u)$ is further than $r$ distance from $(\tx,\tu)$. Since $\B_\epsilon(x,u) \subset D$, $(x,u)$ is at least $\epsilon$ distance from any point in $D^c$ and therefore $(x,u) \in D_\epsilon$.
\end{proof}

In order to ensure $L_{f-g} < 1$, since it is derived from the training data and learned model, we must train a learned model that is sufficiently accurate (i.e. low error on $\S \cup \Psi$). In our experiments, it was enough to minimize mean squared error over the training set to learn models with this property.

\begin{figure}
    \centering
    \includegraphics[width=0.8\linewidth]{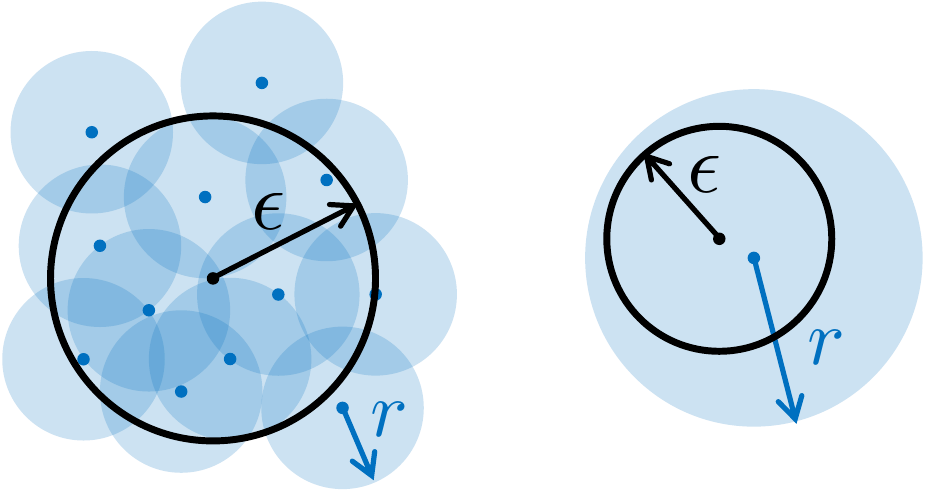}
    \caption{Illustrating the advantage of $L_{f-g} < 1$ and $r$ selected according to \eqref{eq:select_bt}. An $\epsilon$-ball about a query point is shown in black; $r$-balls about training data are shown in blue. \textbf{Left}: $L_{f-g} > 1$, therefore requiring many training points to cover an $\epsilon$-ball about the query point. \textbf{Right}: $L_{f-g} < 1$ and $r$ is selected according to \eqref{eq:select_bt}. Under these conditions, only one training point within a $r - \epsilon$ distance ensures an $\epsilon$-ball about the $(x,u)$ is entirely in $D$, ensuring that the query point is in $D_\epsilon$.} \label{fig:small_l}
\end{figure}

To ensure that the resulting trajectory remains in $D_\epsilon$, we ensure that corresponding pairs of state and control lie in $D_\epsilon$ at each step. In growing the search tree $\T$, we break down this requirement into two separate checks, the first of which optimistically adds states to the search tree and the second that requires pairs of states and controls to lie in $D_\epsilon$. To illustrate, suppose we sample a new configuration $x_\textrm{new}$ and grow the tree from some $x$ to $x_\textrm{new}$. At this point, when sampling a control $u$ to steer from $x$ to $x_\textrm{new}$ we enforce that $(x,u) \in D_\epsilon$ (see line \ref{line:dist_check} in Alg. \ref{alg:rrt}). However, how do we know the resulting state, $x' = g(x,u)$, will lie in $D_\epsilon$? Since $x'$ is a state and not a state-control pair, the above question is not well defined. Instead, we perform an optimistic check in adding $x'$ which requires that there exists some $\hat{u}$ such that $(x',\hat{u}) \in D_\epsilon$ (see line \ref{line:opt_check2} in Alg. \ref{alg:rrt}). In turn, when growing the search tree from $x'$ to some other sampled point $x'_\textrm{new}$ we ensure that the pair of state and newly sampled control $u'$ lies in $D_\epsilon$, i.e. $(x',u') \in D_\epsilon$.

\subsubsection{One step feedback law}\label{sec:feedback}

To prevent drift in execution, we also seek to ensure the trajectory planned with RRT can be tracked with minimal error. One key requirement to guarantee a feedback law exists is that the system is sufficiently actuated under the learned dynamics. This requires that $\texttt{dim}(\U) \geq \texttt{dim}(\X)$. The check for sufficient actuation is done on a per state basis and can be done as we grow $\T$. This feedback law ensures that, under the learned dynamics, we can return to a planned trajectory in exactly one step.

\begin{figure}
    \centering
    \includegraphics[width=0.42\textwidth]{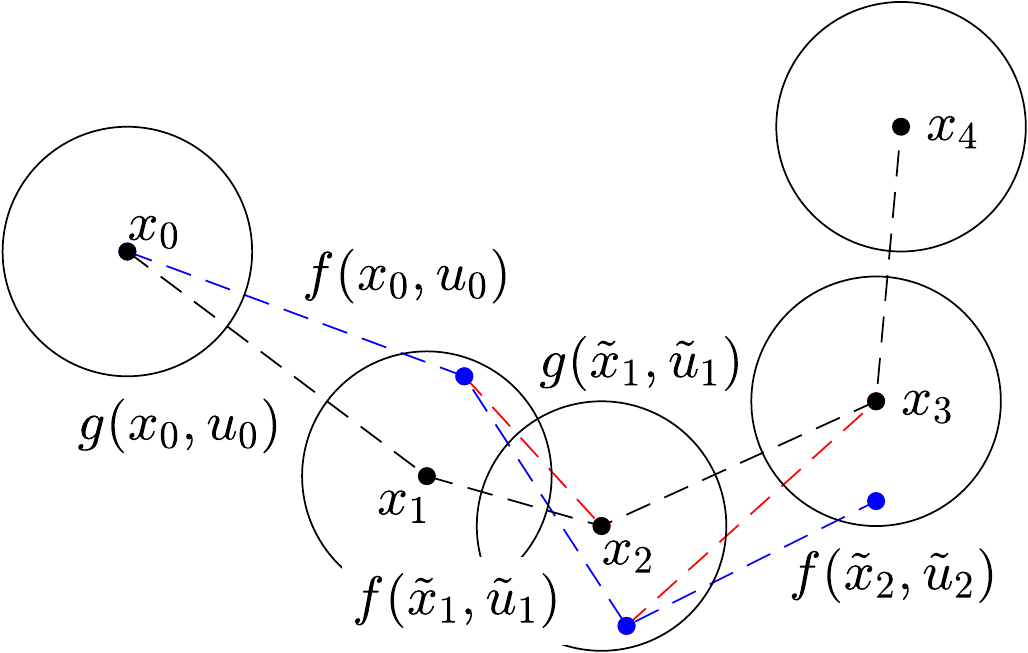}
    \caption{The one-step feedback law: plan with the learned dynamics (dashed black); rollout with the true dynamics (blue); prediction with the learned dynamics using the feedback law (red). 
    At each point, we use \eqref{eq:pert_lls} to find a feedback control $\tilde{u}_k$ so $x_{k+1} = g(\tilde{x}_k, \tilde{u}_k)$. We arrive within $\epsilon$ of the next state under the true dynamics. This repeats until we reach the goal.} \label{fig:one_step}
\end{figure}

Suppose we are executing a trajectory $(x_0, \ldots, x_K)$ with corresponding control $(u_0, \ldots, u_{K-1})$ planned with the learned dynamics, and the system is currently at $x_{k-1}$. Under the learned dynamics, the plan is to move to $x_{k} = g(x_{k-1},u_{k-1})$, but, under the true dynamics, the system will end up at some $\tilde{x}_{k} = f(x_{k-1},u_{k-1})$ which is no more than an $\epsilon$ distance from $x_{k}$. Our goal is to find an input $\tilde{u}_{k}$ such that $x_{k+1} = g(\tilde{x}_k, \tilde{u}_k)$. If this one-step feedback law exists for all $1 \leq k \leq K-1$, it ensures the executed trajectory stays within $\epsilon$ distance of the planned trajectory (see Fig. \ref{fig:one_step}).

With Lipschitz constants $L_{g_0}$ and $L_{g_1}$, we can bound how much the learned dynamics varies in the $\epsilon$-ball about $x_k$.
\begin{equation}\label{eq:g_bound}\small
    \forall \tilde{x}_k \in \mathcal{B}_\epsilon(x_k) \enspace g(\tilde{x}_k,u) = g_0(x_k) + \Delta_0 + (g_1(x_k) + \Delta_1)u
\end{equation}

\noindent where $\|\Delta_0\| \leq L_{g_0}\epsilon$ and  $\|\Delta_1\| \leq L_{g_1}\epsilon$. With \eqref{eq:g_bound}, the existence of $\tilde{u}_k$ is informed by a perturbed linear equation:
\begin{equation}\label{eq:pert_lls}
    \begin{aligned}
        x_{k+1} &= g(\tilde{x}_k,\tilde{u}_k), \quad \tilde{x}_k \in  \mathcal{B}_\epsilon(x_k) \\
        & \hspace{-25pt} \Rightarrow x_{k+1} = g_0(x_k) + \Delta_0 + (g_1(x_k) + \Delta_1)\tilde{u}_k \\
        & \hspace{-25pt} \Rightarrow A\tilde{u}_k = b \\
        \text{with} \enspace A &= g_1(x_k) + \Delta_1 \enspace \text{and} \enspace b = x_{k+1} - g_0(x_k) - \Delta_0
    \end{aligned}\hspace{-15pt}
\end{equation}

Prior to execution, we seek to answer two questions: when does $\tilde{u}_k$ exist and does $\tilde{u}_k$ lie in the control space $\U$ (for instance in the presence of box constraints)? Results from the literature \cite{lotstedt1983perturbation} give a bound on the difference between the nominal solution $u_k$ and perturbed solution $\tilde{u}_k$, 
\begin{equation}
    \Vert u_k - \tilde{u}_k \Vert \le \frac{\Vert g_1(x_k)^+\Vert(\Vert \Delta_1 \Vert \Vert u_k \Vert + \Vert \Delta_0 \Vert)}{1 - \Vert g_1(x_k)^+ \Vert \Vert \Delta_1 \Vert} \doteq u_\textrm{pert},
\end{equation}
where $g_1(x_k)^+$ is the pseudo-inverse of $g_1(x_k)$ (in general $g_1(x_k)$ is not square). We can use this bound to ensure that $\tilde{u}_k$ is guaranteed to lie in $\U$ by enforcing that $u_k + u_\textrm{pert} \mathbf{1}_\infty \subseteq \mathcal{U}$, where $\mathbf{1}_\infty$ is the unit infinity-norm ball. 
Furthermore, $A$ may become singular if $1 - \|g_1(x_k)^+\| \, \|\Delta_1\| \leq 0$. In this case, $\tilde{u}_k$ is not guaranteed to exist.

If $\tilde{u}_k$ exists and satisfies the control constraints for all $1 \leq k \leq K-1$, then we ensure that the system will track the path up to an $\epsilon$ error under the one-step feedback law. 
In planning, we add the existence of a valid one step feedback law as a check when growing the search tree. Formally:

\begin{theorem}\label{th:tracking}
For trajectory $(x_0, \ldots, x_K)$ and $(u_0, \ldots u_{K-1})$, if the solution to the perturbed linear equation \eqref{eq:pert_lls}, $\tilde{u}_k$, exists for all $k \in \{1, \ldots, K-1\}$, then under the true dynamics $\|\tilde{x}_k - x_k\| \leq \epsilon$ for all $k$, given $L_{f-g}$, $L_{g_0}$, and $L_{g_1}$ are each an overestimate of the true Lipschitz constant of $f-g$, $g_0$, and $g_1$, respectively.
\end{theorem}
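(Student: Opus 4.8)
The plan is to prove the claim by induction on $k$, carrying the invariant $\|\tilde{x}_k - x_k\| \le \epsilon$, equivalently $\tilde{x}_k \in \B_\epsilon(x_k)$, along the executed trajectory. This invariant does double duty: it is exactly the conclusion we want, and it is also the precondition $\tilde{x}_k \in \B_\epsilon(x_k)$ under which the perturbed linear equation \eqref{eq:pert_lls} is posed, so the hypothesis that the feedback control $\tilde{u}_k$ exists for every $k$ is only meaningful while the invariant holds. Establishing it inductively is therefore what lets us legitimately invoke the assumed $\tilde{u}_k$ at each step.

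For the base case I would set $\tilde{x}_0 = x_0 = x_I$, so the invariant holds trivially at $k=0$ and no feedback is needed there. Applying the nominal control $u_0$, the learned model predicts $x_1 = g(x_0,u_0)$ while the true system reaches $\tilde{x}_1 = f(x_0,u_0)$; since the planner places $(x_0,u_0) \in D$, the uniform error bound \eqref{eq:nonauto_dyn} gives $\|\tilde{x}_1 - x_1\| = \|f(x_0,u_0) - g(x_0,u_0)\| \le \epsilon$, establishing the invariant at $k=1$. For the inductive step, suppose $\tilde{x}_k \in \B_\epsilon(x_k)$ for some $1 \le k \le K-1$. By hypothesis the solution $\tilde{u}_k$ of \eqref{eq:pert_lls} exists and, by its construction, satisfies $g(\tilde{x}_k,\tilde{u}_k) = x_{k+1}$. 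The true system then advances to $\tilde{x}_{k+1} = f(\tilde{x}_k,\tilde{u}_k)$, so $\|\tilde{x}_{k+1} - x_{k+1}\| = \|f(\tilde{x}_k,\tilde{u}_k) - g(\tilde{x}_k,\tilde{u}_k)\|$, and applying \eqref{eq:nonauto_dyn} once more bounds this by $\epsilon$. This closes the induction and yields $\|\tilde{x}_k - x_k\| \le \epsilon$ for all $k$.

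The step I expect to require the most care is this final appeal to \eqref{eq:nonauto_dyn}, which is valid only when the \emph{executed} pair $(\tilde{x}_k,\tilde{u}_k)$ genuinely lies in $D$, not merely in $\B_\epsilon(x_k) \times \U$. The planner forces the \emph{nominal} pair $(x_k,u_k)$ into $D_\epsilon$, guaranteeing $\B_\epsilon(x_k,u_k) \subset D$, but the executed pair is displaced from it both in state (by up to $\epsilon$) and in control (by up to $u_\textrm{pert}$). The hard part will be arguing that this combined displacement keeps $(\tilde{x}_k,\tilde{u}_k)$ inside $D$, which means tying the $D_\epsilon$ margin together with the perturbation bound $u_\textrm{pert}$ already used to certify the control constraint $u_k + u_\textrm{pert}\mathbf{1}_\infty \subseteq \U$. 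Once domain membership of the executed pair is secured, the remainder is just the two-line triangle-inequality argument of \eqref{eq:nonauto_dyn} applied at each $k$.
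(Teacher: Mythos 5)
Your induction is essentially the paper's own proof: the same invariant $\|\tilde{x}_k - x_k\| \le \epsilon$, the same base case ($g(x_0,u_0)=x_1$ and $\|f(x_0,u_0)-x_1\|\le\epsilon$ since the planner keeps $(x_0,u_0)$ in $D$), and the same inductive step using the existence of $\tilde{u}_k$ to get $g(\tilde{x}_k,\tilde{u}_k)=x_{k+1}$ followed by the uniform error bound \eqref{eq:nonauto_dyn}. The subtlety you flag at the end is real but is not something you introduced: the paper's proof also applies the $\epsilon$ bound to the \emph{executed} pair $(\tilde{x}_k,\tilde{u}_k)$ without comment, implicitly relying on the planner's check $(x_k,u_k)\in D_\epsilon$, even though the combined displacement in state (up to $\epsilon$) and in control (up to $u_\textrm{pert}$) can in principle exceed the $\epsilon$ margin that $D_\epsilon$ provides. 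So your proposal matches the paper's argument, and the open point you identify is a latent implicit assumption of the original proof rather than a defect of your own.
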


\begin{proof}Proof by induction. For the induction step, assume $\|\tilde{x}_k - x_k\| \leq \epsilon$ for some $k$. Since $\tilde{x}_k \in \B_\epsilon(x_k)$, the perturbed linear equation \eqref{eq:pert_lls} is valid. If a solution exists, then $x_{k+1} = g(\tilde{x}_k, \tilde{u}_k)$ and $\|f(\tilde{x}_k,\tilde{u}_k) - x_{k+1}\| \leq \epsilon$. This satisfies the induction step. For the base case, we have $g(x_0,u_0) = x_1$ and $\|f(x_0,u_0) - x_1\| \leq \epsilon$. Thus, for all $k$, $\|\tilde{x}_k - x_k\| \leq \epsilon$.
\end{proof}

\subsubsection{Ensuring safety and invariance about the goal}\label{sec:safety_stability}

Since it is guaranteed by Thm. \ref{th:tracking} that $\|\tilde{x} - x_k\| \leq \epsilon$, we check that $\B_\epsilon(x_k) \subset \Xsafe$ for each $x_k$ on the path to ensure safety.

The exact nature of this check depends on the system and definition of $\Xunsafe$. For example, in our experiments on quadrotor, the state includes the quadrotor's position in $\mathbb{R}^3$ and $\Xunsafe$ is defined by unions of boxes in $\mathbb{R}^3$. By defining a bounding sphere that completely contains the quadrotor, we can verify a path is safe via sphere-box intersection. With the Kuka arm, we randomly sample joint configurations in an $\epsilon$-ball about states, transform the joint configurations via forward kinematics, and check collisions in workspace.
While this method is not guaranteed to validate the entire ball around a state, in practice no collisions resulted from execution of plans. Another approach computes a free-space bubble \cite{quinlan1994real} around a given state $x$ and check if it contains $\B_\epsilon(x)$, however this is known to be conservative.

To stay near the goal after executing the trajectory, we use the same perturbed linear equation to ensure the existence of a one-step feedback law. Here, rather than checking the next state along the trajectory is reachable from the previous, we check that the final state is reachable from itself, i.e. $x_K$ is reachable from $x_K$. Similar to the arguments above, we can repeatedly execute the feedback law to ensure the system remains in an $(\epsilon + \lambda)$-ball about the goal. Formally, we have:

\begin{theorem}
If the solution, denoted $u_\textrm{st}$, to the perturbed linear equation exists for $A = g_1(x_K) + \Delta_1$ and $b = x_K - g_0(x_K) - \Delta_0$ for all $x \in \B_\epsilon(x_K)$, then the closed loop system will remain in $\B_{\epsilon+\lambda}(x_G)$, given $L_{f-g}$, $L_{g_0}$, and $L_{g_1}$ are each an overestimate of the true Lipschitz constant of $f-g$, $g_0$, and $g_1$, respectively.
\end{theorem}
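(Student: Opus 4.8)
The plan is to mirror the inductive, invariance-style argument of Thm. \ref{th:tracking}, but with the single fixed target $x_K$ playing the role that the moving reference $x_{k+1}$ played there. The essential observation is that the perturbed linear equation with $A = g_1(x_K) + \Delta_1$ and $b = x_K - g_0(x_K) - \Delta_0$ is exactly the condition $g(\tilde{x}, u_\textrm{st}) = x_K$ expanded via \eqref{eq:g_bound}: for a current true state $\tilde{x} \in \B_\epsilon(x_K)$ we have $g(\tilde{x}, u_\textrm{st}) = g_0(x_K) + \Delta_0 + (g_1(x_K) + \Delta_1) u_\textrm{st}$, which equals $x_K$ precisely when $A u_\textrm{st} = b$. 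So a solution, whenever it exists, is a one-step feedback control that drives the learned dynamics back to $x_K$ from anywhere in the $\epsilon$-ball.

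First I would establish a one-step contraction into $\B_\epsilon(x_K)$. Given any $\tilde{x} \in \B_\epsilon(x_K)$, the hypothesis guarantees $u_\textrm{st}$ exists, and by the above $g(\tilde{x}, u_\textrm{st}) = x_K$. Since the realized pair $(\tilde{x}, u_\textrm{st})$ lies in $D$, the error bound \eqref{eq:nonauto_dyn} gives $\|f(\tilde{x}, u_\textrm{st}) - x_K\| = \|f(\tilde{x}, u_\textrm{st}) - g(\tilde{x}, u_\textrm{st})\| \leq \epsilon$, so the next true state again lies in $\B_\epsilon(x_K)$. I would then run the induction: the base case follows from Thm. \ref{th:tracking}, which guarantees the true terminal state satisfies $\|\tilde{x}_K - x_K\| \leq \epsilon$, i.e. $\tilde{x}_K \in \B_\epsilon(x_K)$; the inductive step is exactly the contraction just shown, since applying $u_\textrm{st}$ keeps the state in $\B_\epsilon(x_K)$. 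Hence the closed-loop state remains in $\B_\epsilon(x_K)$ for all subsequent steps. Finally, combining with the planning guarantee $\|x_K - x_G\| \leq \lambda$ via the triangle inequality, $\|\tilde{x} - x_G\| \leq \|\tilde{x} - x_K\| + \|x_K - x_G\| \leq \epsilon + \lambda$, so the system is invariant in $\B_{\epsilon+\lambda}(x_G)$.

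The main obstacles are not in the induction itself, which is short, but in the two conditions that the hypothesis and the planner must supply. First, the argument relies on every realized pair $(\tilde{x}, u_\textrm{st})$ lying in the trusted domain $D$, so that the $\epsilon$-bound \eqref{eq:nonauto_dyn} applies at each step; this must be inherited from the planner having placed $x_K$ and its $\epsilon$-neighborhood (with the corresponding controls) inside $D$, and if the ball escapes $D$ the bound is void. Second, well-posedness of the feedback law is delicate: as noted after \eqref{eq:pert_lls}, the solution can fail to exist when $A$ becomes singular, i.e. when $1 - \|g_1(x_K)^+\|\,\|\Delta_1\| \leq 0$, which is exactly why the hypothesis demands existence of $u_\textrm{st}$ for \emph{every} $\tilde{x} \in \B_\epsilon(x_K)$ rather than merely at $x_K$ — the true state may land anywhere in that ball on each step. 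Once both conditions are granted, the invariance claim reduces to the self-loop version of the tracking induction, with the triangle inequality absorbing the goal tolerance $\lambda$.
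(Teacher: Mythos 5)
Your proposal is correct and follows essentially the same route as the paper's proof: use Thm.~\ref{th:tracking} to place $\tilde{x}_K$ in $\B_\epsilon(x_K)$, observe that the perturbed linear equation encodes $g(\tilde{x},u_\textrm{st}) = x_K$ so the error bound \eqref{eq:nonauto_dyn} returns the true state to $\B_\epsilon(x_K)$, and finish with the triangle inequality against $\|x_K - x_G\| \le \lambda$. The only difference is that you make explicit the induction over repeated applications of the feedback law (and the requirement that each realized pair stay in $D$), which the paper leaves implicit after exhibiting a single step.
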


\begin{proof}
By Thm. \ref{th:tracking}, $\|\tilde{x}_K - x_K\| \leq \epsilon$. Thus, if the solution to the perturbed linear equation with $A = g_1(x_K) + \Delta_1$ and $b = x_K - g_0(x_K) - \Delta_0$ exists and is valid then $g(\tilde{x}_K,u_\textrm{st}) = x_K$ and $\|f(\tilde{x}_K,u_\textrm{st}) - x_K\| \leq \epsilon$. Since $\|x_K - x_G\| \leq \lambda$, the system remains in $\B_{\epsilon + \lambda}(x_K)$ by the triangle inequality.
\end{proof}

To close, we note that the overall safety and invariance probability of our method is $\rho^3$, arising from our need to estimate three Lipschitz constants: $L_{f-g}$, $L_{g_0}$, and $L_{g_1}$. Given independent samples for overestimating each constant with probability $\rho$ via Alg. \ref{alg:lipschitz}, the overall correctness probability is the product of the correctness of each constant, i.e. $\rho^3$.

\subsection{Algorithm}\label{sec:alg}

We present our full method, \textbf{L}earned \textbf{M}odels in \textbf{T}rusted \textbf{D}omains (LMTD-RRT), in Alg. \ref{alg:rrt}. In practice, we implemented \texttt{SampleState} and \texttt{SampleControl} in two different ways: uniform sampling and perturbations from training data. Sampling perturbations (up to a norm of $r - \epsilon$) does not exclude valid $(x,u)$ pairs since all points in $D_\epsilon$ lie within $r - \epsilon$ from a training point, and, in cases where $D_\epsilon$ is a relatively small volume, can yield a faster search. However, it also biases samples near regions where training data is more dense. We define the set $\S_\X = \{\tx \enspace | \enspace \exists \tu \enspace \text{s.t.} \enspace (\tx,\tu) \in \S_D\}$ to describe the optimistic check described in Sec. \ref{sec:in_D}. 
\texttt{NN} finds the nearest neighbor and \texttt{OneStep} checks that a valid feedback exists as described in Sec. \ref{sec:feedback}. \texttt{Model} evaluates the learned dynamics and \texttt{InCollision} checks if an $\epsilon$-ball is in $\Xsafe$ as described in Sec. \ref{sec:safety_stability}.

\begin{algorithm}\label{alg:rrt}\small
\KwIn{$x_I$, $x_G$, $S_\X$, $S_D$, $r$, $\epsilon$, $\lambda$, $N_\textrm{samples}$, goal\_bias}
\SetKwFunction{SampleState}{SampleState}
\SetKwFunction{SampleControl}{SampleControl}
\SetKwFunction{NearestNeighbor}{NN}
\SetKwFunction{SteerInDEpsilon}{SteerInDEpsilon}
\SetKwFunction{OneStepReachable}{OneStep}
\SetKwFunction{Model}{Model}
\SetKwFunction{ConstructPath}{ConstructPath}
\SetKwFunction{InCollision}{InCollision}

$\T \leftarrow \{x_I\}$ \\

\While{\upshape True}{
    \While{\upshape $\neg$sampled} {
        $x_{\textrm{new}} \leftarrow $ \SampleState{\upshape goal\_bias} \\
        \If{\upshape $\|x_{\textrm{new}} - $ \NearestNeighbor{\upshape $S_\X$, $x_{\textrm{new}}$}$\| \leq r - \epsilon$}{\label{line:opt_check}
            sampled $\leftarrow$ True
        }
    }
    $x_{\textrm{near}} \leftarrow $ \NearestNeighbor{\upshape $\T$, $x_{\textrm{new}}$} \\
    $i \leftarrow 0, u_{\textrm{best}} \leftarrow \emptyset, x_{\textrm{best}} \leftarrow \emptyset, d \leftarrow \infty$ \\
    \While{\upshape $i < N_\textrm{samples}$}{
        $u \leftarrow$ \SampleControl{} \\
        \If{$\| (x_{\textrm{near}}, u) - $\NearestNeighbor{\upshape $S_D$, $(x_{\textrm{near}}, u)$}$\| \leq r - \epsilon$}{\label{line:dist_check}
            $x_{\textrm{next}} \leftarrow$ \Model{\upshape $x_{\textrm{near}}$, $u$} \\
            \If{\upshape \OneStepReachable{$x_{\textrm{near}}$, $u$, $x_{\textrm{next}}$} $\land$\label{line:one_step} \\
            $\quad \|x_{\textrm{next}} - $ \NearestNeighbor{\upshape $S_\X$, $x_{\textrm{next}}$}$\| \leq r - \epsilon$ $\land$ \label{line:opt_check2} \\
            $\quad \|x_{\textrm{next}} - x_G\| < d$ $\land$ \\
            $\quad \neg$\InCollision{$x_{\textrm{next}}, \epsilon$}}{
                $u_{\textrm{best}} \leftarrow u$, $x_{\textrm{best}} \leftarrow x_{\textrm{next}}$ \\
                $d \leftarrow \|x_{\textrm{next}} - x_G\|$ \\
            }
        }
        $i \leftarrow i + 1$
    }
    \If{\upshape $u_{\textrm{best}}$}{
        $\T \leftarrow \T \cup \{x_{\textrm{best}}\}$ \\
        \If{\upshape $\|x_{\textrm{best}} - x_G \| \leq $ $\lambda$}{
            return \ConstructPath{\upshape $\T$,  $x_{\textrm{best}}$}
        }
    }
}

\caption{LMTD-RRT}
\end{algorithm}

Once a plan has been computed, it can be executed in closed-loop with Alg. \ref{alg:rollout}. \texttt{ModelG0} and \texttt{ModelG1} evaluate $g_0$ and $g_1$ of the learned model. \texttt{SolveLE} solves the linear equation and \texttt{Dynamics} executes the true dynamics $f$.

\begin{algorithm}\label{alg:rollout}\small
\KwIn{$\{x_{k}\}_{k=0}^K$, $\{u_{k}\}_{k=0}^{K-1}$}
\SetKwFunction{SolveLLS}{SolveLE}
\SetKwFunction{Dynamics}{Dynamics}
\SetKwFunction{ModelF}{ModelG0}
\SetKwFunction{ModelG}{ModelG1}

$\tilde{x}_0 \leftarrow x_0$, $k \leftarrow 0$ \\
\For{$k = 1\ldots n-1$} {
    $b \leftarrow x_{k+1} - $ \ModelF{\upshape $\tilde{x}_{k}$}, $A \leftarrow $\ModelG{\upshape $\tilde{x}_{k}$} \\
    $\tilde{u}_{k} \leftarrow $\SolveLLS{$A$, $b$} \\
    $\tilde{x}_{k+1} \leftarrow$ \Dynamics{\upshape $\tilde{x}_{k}$, $\tilde{u}_{k}$} \\
}

\caption{LMTD-Execute}
\end{algorithm}



\section{RESULTS}

We present results on 1) a 2D system to illustrate the need for remaining near the trusted domain, 2) a 6D quadrotor to show scaling to higher-dimensional systems, and 3) a 7DOF Kuka arm simulated in Mujoco \cite{todorov2012mujoco} to show scaling to complex dynamics that are not available in closed form. Using $\rho = 0.975$, we plan with LMTD-RRT and rollout the plans in open-loop (no computation of $\tilde{u}_k$) and closed-loop (Alg. \ref{alg:rollout}). We compare with a na\"ive kinodynamic RRT that skips the checks on lines \ref{line:opt_check}, \ref{line:dist_check}, \ref{line:one_step}-\ref{line:opt_check2} of Alg. \ref{alg:rrt} in both open and closed loop. 
See the video for experiment visualizations.

\subsection{2D Sinusoidal Model}

To aid in visualization, we demonstrate LMTD-RRT on a 2D system with dynamics $f(x,u) = f_0(x) + f_1(x)u$:
\begin{equation*}\small
    f_0(x) = \begin{bmatrix} x \\ y \end{bmatrix} + \Delta T \begin{bmatrix} 3 \sin(0.3(x + 4.5)\big) \big\vert \sin\big(0.3(y + 4.5)) \big\vert \\ 
    3 \sin(0.3(y + 4.5)\big) \big\vert \sin\big(0.3(x + 4.5)) \big\vert \end{bmatrix}
\end{equation*}
\begin{equation*}\small
    f_1(x) = \Delta T \begin{bmatrix} 1 + 0.05\cos(y) & 0 \\ 
    0 & 1 + 0.05\sin(x) \end{bmatrix}
\end{equation*}

\noindent where $\Delta T = 0.2$. We are given 9000 training points $(x_i, u_i, f(x_i,u_i))$, where $x_i$ is drawn uniformly from an `L'-shaped subset of $\mathcal{X}$ (see Fig. \ref{fig:sinusoid}) and $u_i$ is drawn uniformly from $\mathcal{U} = [-1, 1]^2$. $g_0(x)$ and $g_1(x)$ are modeled with separate neural networks with one hidden layer of size 128 and 512, respectively. We select $a=3$ in Alg. \ref{alg:select_D}. 1000 more samples are used to estimate $L_{f-g}$ via Alg. \ref{alg:lipschitz}, which we validate with a KS test with a $p$ value of $0.56$, far above the $0.05$ threshold significance value. We obtain $\hat\gamma = 0.117$ and $c = 6.85\times 10^{-4}$, giving $\epsilon = 0.215$ over $D$.

See Fig. \ref{fig:sinusoid} for examples of the nominal, open-loop, and closed-loop trajectories planned with LMTD-RRT and a na\"ive kinodynamic RRT. The plan computed with LMTD-RRT remains in regions where we can trust the learned model (i.e. within $D_\epsilon$) and the closed-loop execution of the trajectory converges to $\B_{\epsilon+\lambda}(x_G)$. In contrast, both the open-loop and closed-loop execution of the na\"ive RRT plan diverge.
We provide statistics in Table \ref{table:stats_sinusoid} of maximum $\ell_2$ tracking error $\max_{i \in \{1, \ldots, T\}}\Vert \tilde{x}_i - x_i\Vert$ and final $\ell_2$ distance to the goal $\Vert \tilde{x}_T - x_G\Vert_2$ for both the open loop (OL) and closed loop (CL) variants, averaged over 70 random start/goal states. To give the baseline an advantage, we fix the start/goal states and plan with na\"ive RRT using two different dynamics models: 1) the same learned dynamics model used in LMTD-RRT and 2) a learned dynamics model with the same hyperparameters trained on the full dataset ($10^4$ datapoints), and report the statistics on the minimum of the two errors. The worst case tracking error for the plan computed with LMTD-RRT was $0.199$, which is within the guaranteed tracking error bound of $\epsilon = 0.215$, while despite the data advantage, plans computed with na\"ive RRT suffer from higher tracking error. Average planning times for LMTD-RRT and na\"ive RRT are 4.5 and 17 seconds, respectively.
Overall, this suggests that planning with LMTD-RRT avoids regions where model error may lead to poor tracking, unlike planning with a na\"ive RRT.

\begin{figure}
    \centering
    \includegraphics[width=\linewidth]{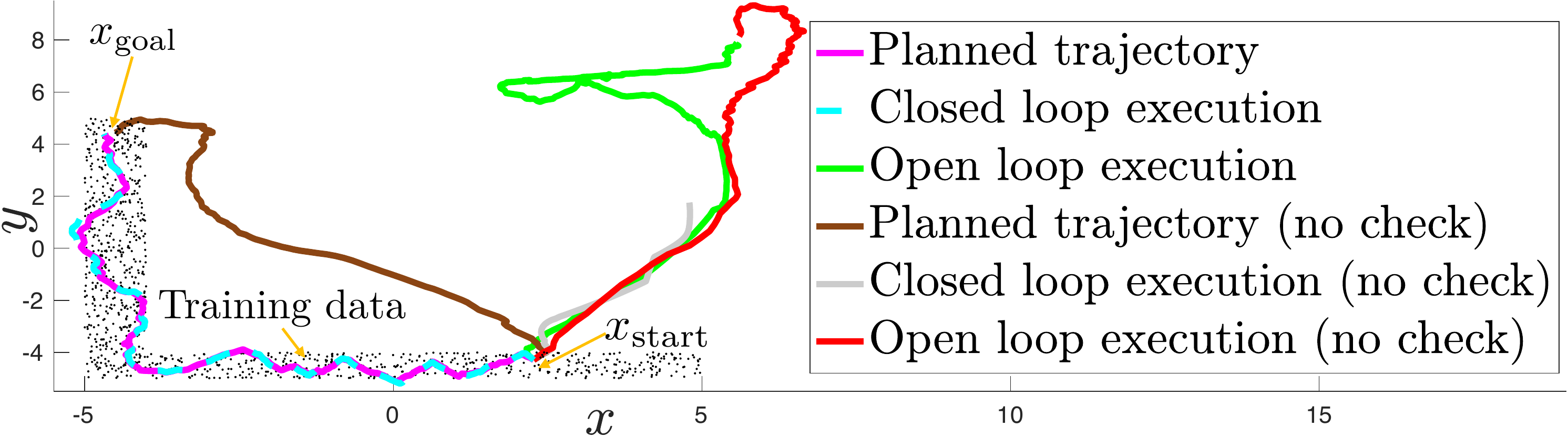}
    \caption{2D sinusoidal dynamics. The LMTD-RRT plan (magenta) stays in $D$ and ensures a valid feedback law exists at each step. The plan can be tracked within $\epsilon$ under closed loop control (cyan). If feedback is not applied, the system drifts to the edge of the trusted domain, exits, and diverges (green). The na\"ive RRT plan (brown) does not consider $D$, and does not reach the goal under closed loop (grey) or open loop (red) control.
    }
    \label{fig:sinusoid}
\end{figure}

\begin{table}\centering
\begin{tabular}{ c | c | c }
  & LMTD-RRT & Na\"ive kino. RRT \\\hline
 \cellcolor{lightgray!50!} \hspace{-7pt}Max. trck. err. (CL)\hspace{-5pt} & \cellcolor{lightgray!50!} 0.099 $\pm$ 0.036 (0.199)& \cellcolor{lightgray!50!} 8.746 $\pm$ 4.195 (15.21) \\ 
 \hspace{-7pt}Goal error (CL)\hspace{-5pt} & 0.039 $\pm$ 0.020 (0.113)& 7.855 $\pm$ 3.851 (14.78) \\ 
 \cellcolor{lightgray!50!} \hspace{-7pt}Max. trck. err. (OL)\hspace{-5pt} & \cellcolor{lightgray!50!} 12.84 $\pm$ 4.444 (20.92)& \cellcolor{lightgray!50!} 10.39 $\pm$ 1.962 (15.31)\\  
 \hspace{-7pt}Goal error (OL)\hspace{-5pt} & 12.40 $\pm$ 4.576 (20.92)& 10.12 $\pm$ 1.762 (15.20)\\  
\end{tabular}
\caption{Sinusoid errors in closed loop (CL) and open loop (OL). \\ Mean $\pm$ standard deviation (worst case).}
\label{table:stats_sinusoid}
\end{table}

\subsection{6D Quadrotor Model}

\begin{figure}
    \centering
    \includegraphics[width=\linewidth]{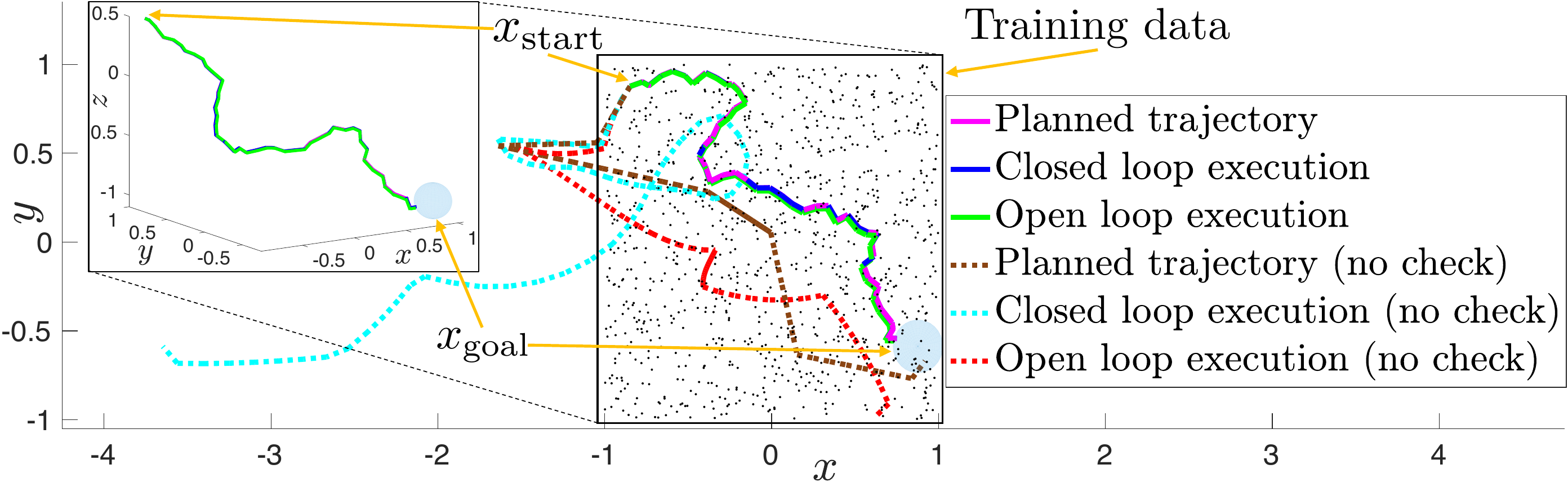}
    \caption{Quadrotor tracking example. The trajectory planned with LMTD-RRT (magenta) is tracked in closed loop (blue) and reaches the goal. The open loop (green) also converges near the goal, but not as close as the closed loop. The na\"ive RRT produces a plan (brown) that leaves the trusted domain. Thus, both the open (red) and closed (light blue) loop rapidly diverge.}
    \label{fig:quadrotor}
\end{figure}

We evaluate our method on 6-dimensional fully-actuated quadrotor dynamics \cite{quadrotor} with state $x = [\chi, y, z, \phi, \theta, \psi]^\top$, where $f(x,u) = f_0(x) + f_1(x)u$, $f_0(x) = x$ and $f_1(x) = $
\begin{equation*}\footnotesize
\Delta T \begin{bmatrix} c_\theta c_\psi &\hspace{-6pt} -c_\phi s_\psi + c_\psi s_\phi s_\theta & \hspace{-7pt}s_\psi s_\phi + c_\phi c_\psi s_\theta & \hspace{-5pt}0 & \hspace{-3pt}0 & \hspace{-3pt}0 \\
c_\theta s_\psi &\hspace{-6pt} c_\phi c_\psi + s_\phi s_\psi s_\theta & \hspace{-7pt}-c_\psi s_\phi + c_\phi s_\psi s_\theta & \hspace{-5pt}0 & \hspace{-3pt}0 & \hspace{-3pt}0 \\
-s_\theta &\hspace{-6pt} c_\theta s_\phi & \hspace{-7pt}c_\phi c_\theta & \hspace{-5pt}0 & \hspace{-3pt}0 & \hspace{-3pt}0 \\
0 &\hspace{-5pt} 0 & 0 & \hspace{-5pt}1 & \hspace{-3pt}s_\phi t_\theta & \hspace{-3pt}c_\phi t_\theta \\
0 &\hspace{-5pt} 0 & 0 & \hspace{-5pt}0 & \hspace{-3pt}c_\phi     &    \hspace{-3pt}   -s_\phi \\
0 &\hspace{-5pt} 0 & 0 & \hspace{-5pt}0 & \hspace{-3pt}s_\phi c_\theta & \hspace{-3pt}c_\phi /c_\theta
        \end{bmatrix},
\end{equation*}

\noindent where $\Delta T = 0.1$ and $s_{(\cdot)}$, $c_{(\cdot)}$, and $t_{(\cdot)}$ are short for $\sin(\cdot)$, $\cos(\cdot)$, and $\tan(\cdot)$ respectively. We are given $9 \times 10^6$ training data tuples $(x_i, u_i, f(x_i, u_i))$, where $x_i$, $u_i$ are generated with Halton sampling over $[-1,1]^3 \times [-\frac{\pi}{20},\frac{\pi}{20}]^3$ and $[-1,1]^6$, respectively (data is collected near hover). As $f_0(x)$ is a simple integrator term, we assume it is known and we set $g_0(x) = x$, while $g_1(x)$ is learned with a neural network with one hidden layer of size 4000. We select $a=6$ in Alg. \ref{alg:select_D}. We use $10^6$ more samples in Alg. \ref{alg:lipschitz} to estimate $L_{f-g}$, and conduct a KS test resulting in a $p$-value of $0.43 \gg 0.05$. We obtain $\hat\gamma = 0.205$, $c = 0.011$, and $\epsilon = 0.134$.

See Fig. \ref{fig:quadrotor} for examples of the planned, open-loop, and closed-loop trajectories planned with LMTD-RRT and a na\"ive RRT.
The trajectory planned with LMTD-RRT
remains close to the training data, and the closed-loop system tracks the planned path with $\epsilon$-accuracy converging to $\B_{\epsilon+\lambda}(x_G)$. 
We note that using the feedback controller to track trajectories planned with na\"ive RRT tends to worsen the tracking error, implying our learned model is highly inaccurate outside of the domain. We provide statistics in Table \ref{table:stats_quadrotor} for maximum tracking error and distance to goal, averaged over 100 random start/goal states. The worst case closed-loop tracking error for trajectories planned with LMTD-RRT is $0.011$, again much smaller than $\epsilon$. As with the 2D example, we give the baseline an advantage in computing tracking error statistics by reporting the minimum of the two errors when planning with 1) the same model used in LMTD-RRT and 2) a model trained on the full dataset ($10^7$ points). Despite the data advantage, the plans computed using na\"ive RRT have much higher tracking error. Average planning times for our unoptimized code are 100 sec. for LMTD-RRT and 15 min. for na\"ive RRT, suggesting that sampling focused near the training data can improve planning efficiency.

\begin{figure}
    \centering
    \includegraphics[width=\linewidth]{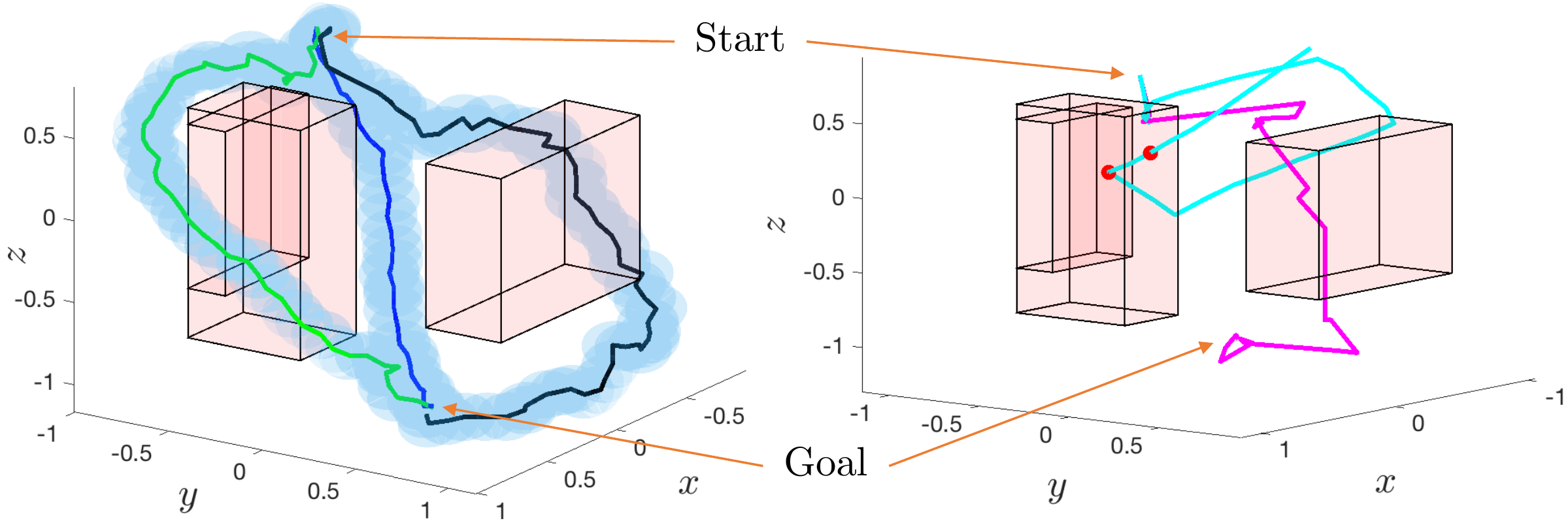}
    \caption{\textbf{Left}: Quadrotor obstacle (red) avoidance. Example plans (green, blue, black), tracking error bound $\epsilon$ overlaid (light blue). Closed-loop trajectories remain in the tubes, converging to the goal without colliding. \textbf{Right}: Na\"ive RRT plan (pink) fails to be tracked (cyan) and collides (red dots).}
    \label{fig:quadrotor_obs}
\end{figure}

We also evaluate LMTD-RRT on an obstacle avoidance problem (Fig. \ref{fig:quadrotor_obs}). We perform collision checking as described in Sec. \ref{sec:safety_stability}. As the tracking error tubes (of radius $\epsilon = 0.134$) centered around the nominal trajectories never intersect with any obstacles, we can guarantee that the system never collides in execution. Empirically, in running Alg. \ref{alg:rrt} over 500 random seeds to obtain different nominal paths, the closed-loop trajectory never collides. In contrast, the na\"ive RRT plan fails to be tracked and collides (Fig. \ref{fig:quadrotor_obs}, right).

\begin{table}\centering
\begin{tabular}{ c | c | c }
  & LMTD-RRT & Na\"ive kino. RRT \\\hline
 \cellcolor{lightgray!50!} \hspace{-7pt}Max. trck. err. (CL)\hspace{-5pt} & \cellcolor{lightgray!50!} 0.003 $\pm$ 0.001 (0.008)& \cellcolor{lightgray!50!} 10.59 $\pm$ 16.75 (153.26) \\ 
 \hspace{-7pt}Goal error (CL)\hspace{-5pt} & 0.001 $\pm$ 0.001 (0.004)& 8.247 $\pm$ 8.434 (46.150) \\ 
 \cellcolor{lightgray!50!} \hspace{-7pt}Max. trck. err. (OL)\hspace{-5pt} & \cellcolor{lightgray!50!} 0.020 $\pm$ 0.007 (0.040)& \cellcolor{lightgray!50!} 4.289 $\pm$ 2.340 (12.986)\\  
 \hspace{-7pt}Goal error (OL)\hspace{-5pt} & 0.019 $\pm$ 0.008 (0.040)& 3.265 $\pm$ 2.028 (11.670)\\  
\end{tabular}
\caption{Quadrotor errors (no obstacles) in closed loop (CL) and \\ open loop (OL). Mean $\pm$ standard deviation (worst case).}
\label{table:stats_quadrotor}
\end{table}

\subsection{7DOF Kuka Arm in Mujoco}

\begin{figure*}[t]
    \centering
    \includegraphics[width=\linewidth]{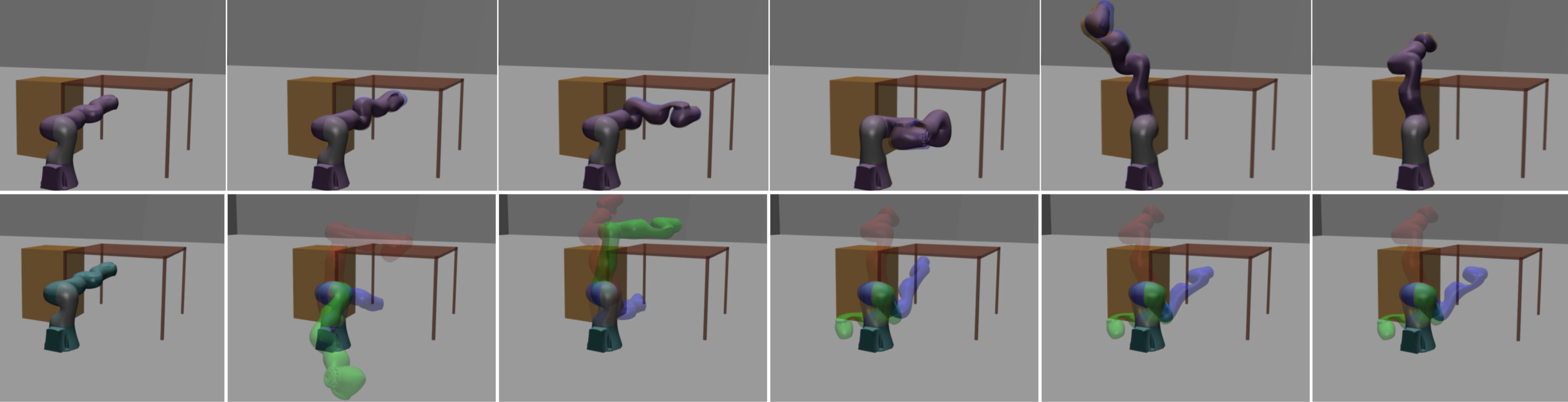}
    \caption{Planning to move a 7DOF arm from below to above a table. Trajectory-tracking time-lapse (time increases from left to right). Red (nominal), green (closed loop), blue (open loop). \textbf{Top}: LMTD-RRT (red, green, blue overlap due to tight tracking). \textbf{Bottom}: Na\"ive RRT (poor tracking causes collision).}
    \label{fig:arm}
\end{figure*}

We evaluate our method on a 7DOF Kuka iiwa arm simulated in Mujoco \cite{todorov2012mujoco} using a Kuka model from \cite{githubkuka}. We train two models using different datasets, one for evaluating tracking error without the presence of obstacles (Table \ref{table:stats_kuka}), and the other for obstacle avoidance. For both models, $g_0(x)$ is again set to be $x$ while $g_1(x)$ is learned with a neural network with one hidden layer of size 4000. For the results in Table \ref{table:stats_kuka}, we are provided 2475 training data tuples, which are collected by recording continuous state-control trajectories from an expert and evaluating $f(x,u)$ on the trajectories and on random state-control perturbations locally around the trajectories. We select $a=5$ in Alg. \ref{alg:select_D}. 275 more samples are used in Alg. \ref{alg:lipschitz} to estimate $L_{f-g}$, validated with a KS test with a $p$ value of $0.58 \gg 0.05$. We obtain $\hat\gamma = 0.087$ and $c = 0.001$, leading to $\epsilon = 0.111$. In Table \ref{table:stats_kuka}, we provide statistics on maximum tracking error and distance to goal under plans with LMTD-RRT and the na\"ive RRT baseline (with a model trained on the full dataset of 2750 points), averaged over 25 runs of each method. Notably, closed-loop tracking of plans found with LMTD-RRT have lowest error, with a worst case error much smaller than $\epsilon = 0.111$. Planning takes on average 1.552 and 0.167 sec. for LMTD-RRT and na\"ive RRT, respectively. We suspect the na\"ive RRT exploits poor dynamics outside of $D$, expediting planning.

\begin{table}\centering
\begin{tabular}{ c | c | c }
  & LMTD-RRT & Na\"ive kino. RRT \\\hline
 \cellcolor{lightgray!50!} \hspace{-7pt}Max. trck. err. (CL)\hspace{-5pt} & \cellcolor{lightgray!50!} 0.010 $\pm$ 0.010 (0.038)& \cellcolor{lightgray!50!} 0.090 $\pm$ 0.250 (1.265) \\ 
 \hspace{-7pt}Goal error (CL)\hspace{-5pt} & 0.004 $\pm$ 0.004 (0.018) & 0.082 $\pm$ 0.251 (1.265) \\ 
 \cellcolor{lightgray!50!} \hspace{-7pt}Max. trck. err. (OL)\hspace{-5pt} & \cellcolor{lightgray!50!} 0.036 $\pm$ 0.041 (0.142) & \cellcolor{lightgray!50!} 0.076 $\pm$ 0.084 (0.282)\\ 
 \hspace{-7pt}Goal error (OL)\hspace{-5pt} & 0.035 $\pm$ 0.040 (0.140) & 0.071 $\pm$ 0.075 (0.225)\\  
\end{tabular}
\caption{7DOF arm errors (no obstacles) in closed loop (CL) and \\ open loop (OL). Mean $\pm$ standard deviation (worst case).}
\label{table:stats_kuka}
\end{table}

For the obstacle avoidance example (Fig. \ref{fig:arm}), we are provided 15266 datapoints, which again take the form of continuous trajectories plus perturbations. We select $a=8$ in Alg. \ref{alg:select_D}, and use 1696 more points to estimate $L_{f-g}$ using Alg. \ref{alg:lipschitz}, which we validate with a KS test with a $p$ value of $0.37 \gg 0.05$. We obtain $\hat\gamma = 0.156$ and $c = 0.010$, leading to $\epsilon = 0.111$. In planning, as described in Sec. \ref{sec:safety_stability}, we perform collision checking by randomly sampling configurations in an $\epsilon$-ball about each point along the trajectory. Though this collision checker is not guaranteed to detect collision, in running LMTD-RRT over 20 random seeds, we did not observe collisions in execution for any of the 20 plans, and the arm safely reaches the goal without collision. Over these trajectories, the worst case tracking error is $0.107$, which remains within $\epsilon = 0.111$. One such plan computed by LMTD-RRT and the corresponding open-loop and closed-loop tracking trajectories, is shown in the top row of Fig. \ref{fig:arm}. The three trajectories nearly overlap exactly due to small tracking error. In contrast, the na\"ive RRT plan cannot be accurately tracked, even with closed-loop control, due to planning outside of the trusted domain, causing the executed trajectories to diverge and collide with the table.

\section{DISCUSSION AND CONCLUSION}

We present a method to bound the difference between learned and true dynamics in a given domain and derive conditions that guarantee a one-step feedback law exists. We combine these two properties to design a planner that can guarantee safety, goal reachability, and that the closed-loop system remains in a small region about the goal.

While the method presented has strong guarantees, it also has limitations which are interesting targets for future work. First, the true dynamics are assumed to be deterministic. Stochastic dynamics may be possible by estimating the Lipschitz constant of the mean dynamics while also appropriately modeling the noise. Second, 
the actuation requirement limits the systems that this method can be applied to. 
For systems with $\texttt{dim}(\U) < \texttt{dim}(\X)$, it may be possible to construct a similar feedback law that guarantees the learned dynamics will lie within a tolerance of planned states which, in turn, could still give strong guarantees on safety and reachability.

\bibliographystyle{IEEEtran}
\bibliography{references.bib}

\end{document}